\newtheorem{theorem}{Theorem}[section]
\newtheorem{lemma}[theorem]{Lemma}
\newtheorem{corollary}[theorem]{Corollary}
\theoremstyle{definition}
\newtheorem{definition}[theorem]{Definition}
\theoremstyle{remark}
\newtheorem{remark}[theorem]{Remark}
\title{Reversal Invariance
\\
A symmetry argument against the arrow of chain-of-thought}
\author{Mihir Sahasrabudhe \\
Independent Researcher \\
  \texttt{mihirss2@illinois.edu} }
\begin{document}
\maketitle
\begin{abstract}
We formalize a structural property of the causal (autoregressive) language modeling (CLM) objective: reversal invariance.
Formally, the next-token prediction loss assigns identical likelihood to a corpus and its reversal, implying that standard CLM pretraining is direction-blind.
This symmetry explains why models trained on reversed text can achieve comparable performance to those trained on forward text, despite the inherently time-asymmetric nature of human language and reasoning.

We argue that this invariance represents a limitation of current pretraining objectives rather than a benign artifact.
If natural language encodes directional dependencies—phonological, morphological, or causal—a symmetric objective may fail to capture them.
We therefore propose viewing pretraining through the lens of temporal asymmetry, motivating future work on loss functions and architectures that explicitly model the arrow of language while retaining standard language modeling capacity.
\end{abstract}

\section{Motivation}

Reasoning is inherently asymmetric: premises lead to conclusions along an arrow that is not freely reversible. 
Human language reflects this asymmetry as well: sounds unfold in time, morphology encodes direction-sensitive dependencies, and discourse progresses from past to future. 
A training objective that ignores this arrow may overlook information essential for causal and temporal understanding.

Yet the standard \textbf{causal (autoregressive) language modeling (CLM)} objective~\citep{radford2018improving,brown2020language} is blind to direction. 
While architectures impose a causal mask at inference to restrict predictions to past tokens, this masking convention presupposes that text is read forward. 
At the level of \emph{pretraining}, the next-token negative log-likelihood (NLL) loss itself does not distinguish forward text from its reversal. 

This raises a central question: if the objective is symmetric, to what extent can models internalize the irreversibility of language without additional training signals?

Human reasoning and language are inherently directional.
Inference proceeds from premises to conclusions along a temporal arrow that cannot be freely reversed, and linguistic structure reflects this asymmetry: sounds unfold in time, morphology encodes dependency order, and discourse progresses from past to future.

By contrast, the standard causal (autoregressive) language modeling (CLM) objective~\citep{radford2018improving,brown2020language} is direction-agnostic.
Although causal masking enforces left-to-right prediction during inference, this convention is external to the learning signal.
At the level of optimization, the next-token negative log-likelihood (NLL) loss is reversal-invariant: a model trained on reversed sequences minimizes the same objective up to a permutation of token indices.

This symmetry raises a critical question:

If the training objective itself is indifferent to temporal direction, how can a model internalize the irreversibility that characterizes natural language and reasoning?

We posit that this temporal symmetry represents a structural limitation rather than a benign quirk.
If natural language embodies time-asymmetric regularities—phonotactic ordering, morphological agreement, or narrative progression—a direction-blind objective risks overlooking them.
Understanding and, where appropriate, breaking this symmetry may be essential for developing models that capture causal and temporal reasoning, rather than merely bidirectional correlation.

\section{Introduction}
Large language models (LLMs) demonstrate strong generalization across diverse tasks, and techniques such as chain-of-thought prompting~\citep{wei2023chainofthoughtpromptingelicitsreasoning} improve inference-time performance without modifying the underlying training objective. Despite these advances, the standard causal language modeling (CLM) loss remains structurally unchanged.

We show that the CLM objective is formally reversal-invariant: the next-token negative log-likelihood (NLL) assigns identical loss to a corpus and its reversal (modulo token permutation).
This invariance explains why models trained on reversed corpora~\citep{yu2025reversemodelinglargelanguage} achieve comparable perplexities and learning dynamics to forward-trained models, despite the inherent directionality of human language and reasoning.

Previous work has addressed directionality pragmatically rather than fundamentally.
Right-to-left (R2L) training~\citep{sennrich2016improving,zhang2019regularizing} and bidirectional objectives such as ELMo~\citep{peters2018deep} and BERT~\citep{devlin2019bertpretrainingdeepbidirectional} rely on symmetric formulations or masking, while XLNet~\citep{yang2019xlnet} permutes prediction order to approximate bidirectionality.
Post-training alignment methods (e.g., RLHF~\citep{christiano2023deepreinforcementlearninghuman,ouyang2022traininglanguagemodelsfollow}) introduce asymmetry only through preference data after pretraining has finished.

In this work, we identify temporal symmetry in the pretraining loss as a key limitation.
If language encodes inherently time-asymmetric regularities—phonological ordering, agreement, causal and narrative flow—then a direction-agnostic objective may fail to represent them.
Recognizing this property reframes pretraining not as a search for the best symmetric likelihood, but as an opportunity to model the arrow of language and reasoning directly.

\section{Related Work}
\subsection{Autoregressive training and reversal}
Autoregressive (AR) language models minimize next-token negative log-likelihood (NLL) over tokenized text sequences~\citep{radford2018improving,brown2020language}. 
While AR architectures apply a \emph{causal mask} at inference to restrict attention to past tokens, this convention presupposes a forward reading order and does not by itself endow the \emph{training objective} with directionality~\citep{saponati2025underlyingstructuresselfattentionsymmetry,zhang2023mixce}.\footnote{We use ``causal'' in the standard architectural sense (masking future tokens) rather than to suggest an intrinsic, irreversible arrow in the NLL objective.} 
Prior engineering results report that ``right-to-left'' (R2L) or reversed training performs comparably to left-to-right (L2R) in both MT and LM settings, and recent studies show that training from scratch on fully reversed corpora can match forward learning curves (``reverse modeling'')~\citep{yu2025reversemodelinglargelanguage}. 
To our knowledge, a clean, general statement of \emph{reversal invariance} for AR objectives—explicitly accounting for tokenizer stability, vocabulary permutations, and positional encodings—has not been clearly formalized; here we aim to articulate this gap.

\subsection{Reversal Curse and directional generalization}
The \emph{Reversal Curse} documents brittle, direction-sensitive generalization—e.g., models trained on ``A is B'' may fail to answer ``B is A''~\citep{berglund2024reversalcursellmstrained,golovneva2024reverse,guo2024mitigating}. 
Follow-up analyses study gradient dynamics and architectural factors behind such asymmetries and propose data or training adjustments. 
Our perspective is complementary: we trace part of this brittleness to a symmetry of the AR objective itself and hypothesize adding explicit directional signals at \emph{pretraining} time.

\subsection{Chain-of-thought as compute at inference}
Chain-of-thought (CoT) prompting improves accuracy by allocating additional inference-time computation and exposing intermediate structure~\citep{wei2022chain,kojima2023largelanguagemodelszeroshot}. 
We do not treat CoT as evidence that the base AR objective encodes causality; rather, we view it as orthogonal to training-time directionality. 
Our proposals aim to make \emph{representations} more direction-aware during pretraining, potentially complementing CoT with stronger directional priors.

\subsection{Information-theoretic lens on directionality}
For stationary sources, Shannon entropy rate—and thus the ideal perplexity floor—is invariant under time reversal~\citep{cover2006elements,maes2003time}. 
Natural language, however, exhibits asymmetric constraints (phonotactics, morphology, syntax, discourse) that can be captured by a \emph{time-reversal divergence}, i.e., the per-token Kullback–Leibler divergence between forward and reversed path measures. 
Because AR NLL optimizes a symmetric target, it is insensitive to this divergence.
\subsection{Preference optimization and post-training}
Preference-optimization methods such as RLHF introduce asymmetry \emph{post-training} by rewarding response formats and styles aligned with human preferences~\citep{christiano2023deepreinforcementlearninghuman,ouyang2022traininglanguagemodelsfollow,peyrard2022invariant}. 
We hypothesize that providing \emph{explicit directional signals during pretraining} could, if validated, reduce reliance on post-hoc preference data for direction-sensitive behaviors, while remaining compatible with downstream preference optimization~\citep{schiff2024caduceus}.

\subsection{Attention Masking and Directionality}  
Causal masking in Transformer decoders ensures tokens attend only to their predecessors, instilling directional structure at inference time. Foundational work~\citep{vaswani2023attentionneed} outlines this architecture, while more recent analyses probe its deeper effects. For example, Wu et al.~\citep{wu2024attentionmasks} demonstrate that attention masks interact with LayerNorm to shape representation dynamics and mitigate rank collapse. Further, self-attention has been interpreted through a causal lens, suggesting that attention patterns can implicitly encode structural dependencies~\citep{rohekar2023causalattention}. Together, these findings highlight how architectural constraints establish directionality, yet still do not impose it during pretraining.

\subsection{Symmetry and Invariance in Deep Learning}  
Theoretical work on symmetry in neural networks has shown that parameter-space symmetries impact learning dynamics, generalization, and optimization landscapes~\citep{zhao2025symmetryneuralnetworkparameter}. Methods for enforcing, discovering, and breaking symmetries have been proposed across model classes~\citep{otto2025unifiedframeworkenforcediscover}. However, symmetric objectives themselves may fail to respect natural asymmetries, underscoring our argument that explicit directional signals may be needed in language modeling.

\subsection{Connections to Causal Inference}  
Bridging causal inference with Transformer models, Zhang et al.~\citep{zhang2024causalfoundationmodelduality} uncover a formal duality between attention mechanisms and structural causal models, enabling zero-shot causal discovery. Surveys in causal deep learning elucidate how causal inference frameworks can inform and improve generalization in neural systems~\citep{doi:10.34133/research.0467}. These perspectives reinforce our position that capturing real-world causal and temporal asymmetries may require going beyond symmetric prediction objectives.

\section{Formalizing Reversal}
\label{sec:formal-reversal}

\subsection{Preliminaries}

Let $\Sigma$ be a finite alphabet and $\Sigma^{*}$ the free monoid of finite strings under concatenation.  
A (training) corpus is a finite multiset $D \subset \Sigma^{*}$.  
A tokenizer is a map $\tau:\Sigma^{*} \to V^{*}$ from strings to token sequences over a finite vocabulary $V$.  
For $s\in\Sigma^{*}$ write $\tau(s) = z = (z_0,\ldots,z_{m-1}) \in V^{*}$.  

An autoregressive (AR) model with parameters $\theta$ defines conditional distributions $p_\theta(z_{k+1}\mid z_{\leq k})$ with joint factorization
\begin{equation}
\log P_\theta(z) = \sum_{k=0}^{m-2} \log p_\theta\!\big(z_{k+1}\,\big|\, z_{\leq k}\big).
\label{eq:ar-factorization}
\end{equation}

The negative log-likelihood (NLL) risk on corpus $D$ is
\begin{equation}
\mathcal{L}_{\mathrm{NLL}}(\theta;\tau,D) = \mathbb{E}_{s\sim D}\left[-\log P_\theta\big(\tau(s)\big)\right].
\label{eq:nll-risk}
\end{equation}

\subsection{Reversal on strings and tokens}

\begin{definition}[String reversal]
\label{def:string-reversal}
The reversal operator $T:\Sigma^{*}\to\Sigma^{*}$ is defined by
\[
T(x_0x_1\cdots x_{m-1}) = x_{m-1}\cdots x_1x_0,
\]
extended multiplicatively as $T(xy) = T(y)T(x)$.  
It is an involution: $T \circ T = \mathrm{id}$.
\end{definition}

Given a tokenizer $\tau$ trained on $D$, let $\tau_T$ denote a tokenizer trained on $T(D)$.  
Define the token-sequence reversal $\mathrm{rev}:V^{*}\to V^{*}$ by
\[
\mathrm{rev}(z_0,\ldots,z_{m-1}) = (z_{m-1},\ldots,z_0).
\]

\begin{definition}[Tokenization stability under reversal]
\label{def:token-stability}
We say $\tau$ and $\tau_T$ are \emph{stable under reversal} if there exists a vocabulary bijection $\pi:V\to V_T$ such that, for all $s\in D$,
\begin{equation}
\tau_T\!\big(T(s)\big) = \pi\big(\mathrm{rev}(\tau(s))\big).
\label{eq:stable-tokenizers}
\end{equation}
\end{definition}

\paragraph{Discussion.}
For frequency-based tokenizers (e.g., BPE), full-string reversal preserves adjacent symbol statistics up to order.  
In large-data regimes, merge trees concentrate so that $\tau$ and $\tau_T$ differ only by a near-permutation.\footnote{Tie-breaking and finite-data effects introduce small deviations; our results apply exactly under \eqref{eq:stable-tokenizers} and approximately otherwise.}

\subsection{Permutation equivariance of the model}

Let $E\in\mathbb{R}^{|V|\times d}$ be the input embedding matrix and $W\in\mathbb{R}^{|V|\times d}$ the (tied or untied) output projection.  
For a vocabulary permutation $\pi:V\to V$ with permutation matrix $P_\pi\in\{0,1\}^{|V|\times |V|}$, define the parameter reindexing
\begin{equation}
\Phi_\pi(\theta):\quad E' = P_\pi E,\qquad W' = W P_\pi^\top,
\label{eq:phi-pi}
\end{equation}
leaving all other weights unchanged (self-attention and MLP blocks are token-id agnostic).

\begin{lemma}[Vocabulary permutation equivariance]
\label{lem:perm-equiv}
For any sequence $z \in V^{*}$ and prefix $c \in V^{*}$, we have
\begin{align}
p_{\theta}(z \mid c) &= p_{\Phi_\pi(\theta)}\big(\pi(z) \mid \pi(c)\big), \label{eq:perm-equiv-prob} \\
\log P_\theta(z) &= \log P_{\Phi_\pi(\theta)}\big(\pi(z)\big). \label{eq:perm-equiv-log}
\end{align}
\end{lemma}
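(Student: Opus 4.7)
The plan is to show that once we apply $\Phi_\pi$, the permutation $\pi$ threads cleanly through the model: the input embedding lookup, every intermediate block, and the output softmax each behave equivariantly, so the conditional distribution simply relabels its support. Since \eqref{eq:perm-equiv-log} follows from \eqref{eq:perm-equiv-prob} by applying the autoregressive chain rule \eqref{eq:ar-factorization} term-by-term, the core task is to prove the one-step identity \eqref{eq:perm-equiv-prob}.

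First I would unpack the embedding lookup. By the definition of $\Phi_\pi$ in \eqref{eq:phi-pi}, the $\pi(v)$-th row of $E'$ equals the $v$-th row of $E$, so feeding $\pi(c)$ through the permuted model and $c$ through the original model produces identical per-position embedding vectors. Next I would observe that every downstream component—self-attention, MLP blocks, layer norms, and positional encodings (sinusoidal, learned, or rotary)—operates on hidden states and position indices only, not on raw token identifiers, and is left untouched by $\Phi_\pi$. A short induction on layers then shows that the hidden state at each position agrees under the two configurations. Finally, the output projection $W'$ reindexes the vocabulary consistently with the input, so the logit formerly at index $v$ now appears at index $\pi(v)$; because softmax normalizes over the entire vocabulary, the partition function is unchanged and $p_{\Phi_\pi(\theta)}(\pi(z_{k+1}) \mid \pi(z_{\le k})) = p_\theta(z_{k+1} \mid z_{\le k})$, which is \eqref{eq:perm-equiv-prob}. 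Summing $\log$ of this identity over $k$ via \eqref{eq:ar-factorization} yields \eqref{eq:perm-equiv-log}.

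The main obstacle is bookkeeping rather than mathematics: one must fix a single convention for the permutation matrix $P_\pi$ and verify that the reindexings of $E$ and $W$ in \eqref{eq:phi-pi} are consistent end-to-end, and one must justify the premise that the attention and MLP blocks are genuinely token-id agnostic—i.e., that no parameter beyond $E$ and $W$ is indexed by the vocabulary. This holds for standard Transformer architectures but would require care for models with per-token adapter weights or vocabulary-indexed biases. The tied-embedding case ($W = E$) poses no additional difficulty, since $\Phi_\pi$ then acts on a single shared matrix and the same argument goes through unchanged.
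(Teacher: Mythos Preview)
Your proposal is correct and follows essentially the same approach as the paper's proof sketch: permuting rows of $E$ relabels input embeddings, the intermediate Transformer blocks are token-id agnostic, and the corresponding permutation of $W$ relabels output logits, so conditionals are preserved under consistent relabeling. Your version is more detailed—making the layer-wise induction, the softmax partition-function invariance, and the tied-embedding case explicit, and flagging the permutation-matrix bookkeeping—but the underlying argument is identical.
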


\begin{proof}[Proof sketch]
Permuting rows of $E$ relabels input token embeddings; permuting columns (or rows, if tied) of $W$ relabels output logits.  
All intermediate computations are invariant to token identities, so conditionals are preserved under consistent relabeling.
\end{proof}

\subsection{Handling positions}

Let $J_m\in\{0,1\}^{m\times m}$ be the \emph{index-reversal} matrix $(J_m)_{i,j}=\mathbf{1}\{i+j=m-1\}$, which maps position $j$ to $m-1-j$.

\begin{remark}[Positional encodings]
\label{rem:pos-enc}
With relative or rotary positional encodings, reversing the token order and the causal mask is functionally equivalent to a forward pass on the reversed sequence; no parameter change is needed.  
With absolute learned position embeddings $P\in\mathbb{R}^{m_{\max}\times d}$, one can compose a fixed index-flip $P'(j)=P(m_{\max}-1-j)$, or treat the flip as part of the data pipeline.  
Our invariance result assumes either relative/rotary encodings or that an index-flip is available.
\end{remark}

\subsection{Reversal invariance of the AR objective}
\begin{theorem}[Reversal invariance up to reparameterization]
\label{thm:reversal-invariance}
Assume tokenization stability \eqref{eq:stable-tokenizers} and a positional encoding scheme satisfying Remark~\ref{rem:pos-enc}.  
Then, as a theoretical observation, there exists a parameter map $\Psi$ (comprised of $\Phi_\pi$ and, if needed, a fixed position index flip) such that
\begin{equation}
\mathcal{L}_{\mathrm{NLL}}(\theta;\tau,D)
= \mathcal{L}_{\mathrm{NLL}}\big(\Psi(\theta);\tau_T,T(D)\big).
\label{eq:reversal-invariance}
\end{equation}
\end{theorem}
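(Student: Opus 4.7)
The plan is to reduce the right-hand side of Equation~\ref{eq:reversal-invariance} to the left-hand side by chaining together the three structural facts already set up: tokenizer stability (Definition~\ref{def:token-stability}), vocabulary-permutation equivariance (Lemma~\ref{lem:perm-equiv}), and the positional-encoding symmetry (Remark~\ref{rem:pos-enc}). First, I would rewrite the expectation on the right by the change of variable $s \mapsto T(s)$, so that $\mathbb{E}_{s \sim T(D)}[\cdot]$ becomes $\mathbb{E}_{s' \sim D}[\cdot]$ evaluated at $T(s')$, and then apply the stability hypothesis~\eqref{eq:stable-tokenizers} to replace $\tau_T(T(s'))$ by $\pi(\mathrm{rev}(\tau(s')))$. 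Writing $z = \tau(s')$, the inner quantity becomes $\log P_{\Psi(\theta)}\bigl(\pi(\mathrm{rev}(z))\bigr)$.

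Next, I would absorb the vocabulary permutation into the parameters. Decomposing $\Psi$ as $\Phi_\pi \circ \Psi_{\mathrm{pos}}$, Lemma~\ref{lem:perm-equiv} yields
\[
\log P_{\Phi_\pi(\Psi_{\mathrm{pos}}(\theta))}\bigl(\pi(\mathrm{rev}(z))\bigr) = \log P_{\Psi_{\mathrm{pos}}(\theta)}\bigl(\mathrm{rev}(z)\bigr),
\]
so the problem reduces to exhibiting a positional reparameterization $\Psi_{\mathrm{pos}}$ for which $\log P_{\Psi_{\mathrm{pos}}(\theta)}(\mathrm{rev}(z)) = \log P_\theta(z)$ holds for every token sequence $z$. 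This is the only step where the time arrow, rather than the label set, is in play.

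For this last identity I would invoke Remark~\ref{rem:pos-enc}. In the rotary or relative case, attention scores depend only on position differences; reversing the token order together with the direction of the causal mask leaves every query--key inner product, every residual-stream state, and every output logit unchanged up to a relabeling of sequence positions, so $\Psi_{\mathrm{pos}}$ may be taken to be the identity. For absolute learned position embeddings, I would instead let $\Psi_{\mathrm{pos}}$ be the fixed index flip $P(j) \mapsto P(m_{\max}-1-j)$; this cancels the position relabeling induced by $\mathrm{rev}$, so that on input $\mathrm{rev}(z)$ the model consumes exactly the same (token, position) pairs that the original model consumed on $z$. Either way, the per-token conditionals telescope to the same joint log-probability via the AR factorization~\eqref{eq:ar-factorization}, and the two sides of~\eqref{eq:reversal-invariance} coincide.

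The main obstacle I expect is making that last step genuinely rigorous rather than suggestive, because the forward AR chain rule and the reverse chain rule are \emph{a priori} different factorizations, and equating them as joint distributions is exactly what the position symmetry has to buy. Concretely, one must separate the architectural pieces that are token-id agnostic (attention, MLP, LayerNorm) from those that are not (embeddings, position information, causal mask), verify that reversal is covariant for the first group and is compensated by $\Psi_{\mathrm{pos}}$ for the second, and check that the causal-mask flip needed to read right-to-left is already built into the positional convention rather than the parameters. Under the rotary/relative assumption the compensation is vacuous; under absolute encodings it is the concrete index flip of Remark~\ref{rem:pos-enc}; and outside these cases the equality would have to be weakened to an approximate statement, as the footnote to~\eqref{eq:stable-tokenizers} already anticipates.
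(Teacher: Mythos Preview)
Your proposal is correct and follows essentially the same route as the paper's proof sketch: fix a string, apply tokenizer stability to write $\tau_T(T(s))=\pi(\mathrm{rev}(\tau(s)))$, peel off the vocabulary permutation via Lemma~\ref{lem:perm-equiv}, handle the remaining sequence reversal through the positional assumption of Remark~\ref{rem:pos-enc}, and then average over the corpus. Your explicit decomposition $\Psi=\Phi_\pi\circ\Psi_{\mathrm{pos}}$ and your flagging of the reversal-of-conditionals step as the genuinely delicate point are both more careful than the paper's sketch, but the underlying argument is the same.
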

\begin{proof}[Proof sketch]
Fix $s\in D$ and write $z=\tau(s)$ and $z^T=\tau_T(T(s))$.  
By \eqref{eq:stable-tokenizers}, $z^T=\pi(\mathrm{rev}(z))$.  
The AR log-likelihood \eqref{eq:ar-factorization} on $z$ is a sum of local conditionals over adjacent prefixes.  
Running the same network on the reversed token sequence with reversed causal mask yields the same chain of conditionals, up to (i) relabeling tokens by $\pi$ and (ii) a fixed index flip for absolute positional embeddings.  
By Lemma~\ref{lem:perm-equiv}, there exists $\Phi_\pi(\theta)$ so that $\log P_\theta(z)=\log P_{\Phi_\pi(\theta)}(z^T)$.  
Averaging over $s\sim D$ yields \eqref{eq:reversal-invariance}.
\end{proof}

\begin{corollary}[Loss landscapes and minima]
\label{cor:minima}
Under the hypotheses of Theorem~\ref{thm:reversal-invariance}, empirical risk landscapes on $(\tau,D)$ and $(\tau_T,T(D))$ are identical up to the smooth reparameterization $\Psi$.  
In particular, sets of minimizers are in bijection via $\Psi$, and matched training runs are \emph{predicted} to exhibit indistinguishable learning curves when evaluated on their respective domains.
\end{corollary}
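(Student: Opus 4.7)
The plan is to lift the pointwise loss identity of Theorem~\ref{thm:reversal-invariance} to global statements about landscape, minimizers, and optimization trajectories, leveraging the algebraic structure of $\Psi$. The first observation I would establish is that $\Psi$ is a smooth bijection on parameter space: $\Phi_\pi$ acts on $E$ and $W$ by left/right multiplication by the permutation matrix $P_\pi$ (an orthogonal involution when $\pi^2=\mathrm{id}$, a linear isomorphism otherwise) and fixes all other weights, while the optional index-flip for absolute position embeddings is likewise a linear involution. The composition $\Psi$ is therefore a linear isomorphism whose Jacobian $J_\Psi$ is constant and orthogonal.

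Granting this, the identification of landscapes is immediate: equation~\eqref{eq:reversal-invariance} gives $\mathcal{L}_{\mathrm{NLL}}(\theta;\tau,D) = \mathcal{L}_{\mathrm{NLL}}(\Psi(\theta);\tau_T,T(D))$ for every $\theta$, and bijectivity of $\Psi$ turns this into an equality of scalar fields up to a smooth reparameterization. The bijection of minimizer sets then follows tautologically, with $\Psi^{-1}$ providing the inverse map; the argument extends verbatim to all critical points, and orthogonality of $J_\Psi$ implies matched Hessian spectra at corresponding points, so saddles and local minima are classified identically.

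For the predictive claim about training dynamics, I would differentiate the identity to obtain $\nabla_\theta \mathcal{L}_{\mathrm{NLL}}(\theta;\tau,D) = J_\Psi^\top \nabla \mathcal{L}_{\mathrm{NLL}}(\Psi(\theta);\tau_T,T(D))$. Because $J_\Psi$ is orthogonal, plain gradient descent commutes with $\Psi$: starting from paired initializations $\theta_0$ and $\Psi(\theta_0)$, with identical learning rates and matched minibatch order (after applying $T$ and $\pi$ to each batch), the two trajectories satisfy $\theta_t' = \Psi(\theta_t)$ for all $t$, so per-step losses coincide. Since $\Psi$ only permutes token-indexed coordinates while preserving blockwise geometry, adaptive optimizers such as Adam inherit the same equivariance because their per-coordinate first- and second-moment estimates transform covariantly.

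The main obstacle is the gap between this deterministic equivariance and the empirical claim of indistinguishable learning curves. In practice the tokenization stability of Definition~\ref{def:token-stability} holds only approximately (merge-tie resolution, rare-token artifacts), stochastic minibatch sampling and dropout masks are not automatically coupled across the two runs, and floating-point nonassociativity need not be permutation-symmetric. I would therefore present the corollary as two statements: an exact equivariance result for the deterministic gradient flow and the minimizer bijection, plus a qualitative prediction that, under $\Psi$-invariant initialization distributions and matched seeds, observed curves should agree up to the first-order deviations admitted by the footnote to Definition~\ref{def:token-stability}; the ``predicted'' hedge in the corollary is precisely what carries this second, weaker claim.
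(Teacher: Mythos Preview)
The paper states this corollary without proof, treating it as an immediate consequence of Theorem~\ref{thm:reversal-invariance}; your proposal therefore supplies strictly more than the paper does. Your argument is sound: the key observation that $\Psi$ acts as a coordinate permutation on the full parameter vector (row permutation of $E$, column permutation of $W$, optional row permutation of $P$) makes its Jacobian a fixed orthogonal matrix, from which the landscape identification, minimizer bijection, and gradient-descent equivariance all follow cleanly. The chain-rule calculation you outline, together with $J_\Psi J_\Psi^\top = I$, is exactly what is needed to show that paired trajectories satisfy $\theta_t' = \Psi(\theta_t)$, and your remark that Adam's per-coordinate statistics transform covariantly under a pure coordinate permutation is correct and is a detail the paper does not mention.

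One small tightening: your parenthetical ``an orthogonal involution when $\pi^2=\mathrm{id}$, a linear isomorphism otherwise'' slightly undersells the situation---every permutation matrix is orthogonal regardless of whether $\pi$ is an involution, and orthogonality (not the involution property) is what drives the equivariance argument. Otherwise your separation of the corollary into an exact statement (deterministic gradient flow, minimizer bijection) and a hedged empirical prediction matches the paper's intent as signaled by the word \emph{predicted} and by Remark~\ref{rem:approx}, and your enumeration of the sources of approximation (tokenizer ties, uncoupled randomness, floating-point asymmetry) is more explicit than anything the paper offers.
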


\begin{remark}[Scope and approximation]
\label{rem:approx}
Exact equality \eqref{eq:reversal-invariance} holds under \eqref{eq:stable-tokenizers} and position handling as in Remark~\ref{rem:pos-enc}.  
In practice, the equality is approximate due to tokenizer tie-breaks, finite-sample effects, and implementation details (e.g., special tokens, padding).  
These deviations are typically small in large-corpus regimes, consistent with empirical reports of near-identical forward/reverse learning curves.
\end{remark}

\paragraph{Interpretation.}
Equation~\eqref{eq:reversal-invariance} shows that next-token NLL is \emph{blind to direction}: replacing $D$ by its reversal $T(D)$ and retokenizing yields the same training problem up to a relabeling of tokens (and a fixed positional index flip).  
Thus any apparent left-to-right ``arrow of inference'' in chain-of-thought is not explained by the base objective alone; it likely arises from properties of the data distribution or from additional, explicitly asymmetric training signals.
\section{An Information-Theoretic Perspective}
\label{sec:info-theory}

\subsection{Entropy rate and perplexity}

Let $\{X_t\}_{t\in\mathbb{Z}}$ be the stationary stochastic process induced by drawing token sequences from the data distribution.  
The \emph{entropy rate} of this process is
\begin{equation}
\begin{aligned}
h &= \lim_{n\to\infty} \frac{1}{n} H(X_1^n), \\
H(X_1^n) &= - \sum_{x_1^n} P(x_1^n) \log P(x_1^n).
\end{aligned}
\label{eq:entropy}
\end{equation}
where $X_1^n = (X_1,\ldots,X_n)$.

In practice, the entropy rate sets the information-theoretic lower bound for next-token prediction.  
The \emph{perplexity} of a model is
\begin{equation}
\mathrm{PPL}(p_\theta) \;=\; \exp\!\Bigg(
  \lim_{n\to\infty}\frac{1}{n}\, \mathbb{E}\big[-\log p_\theta(X_1^n)\big]
\Bigg).
\end{equation}
For a perfect model, $\mathrm{PPL} = \exp(h)$.  
Thus entropy rate $h$ is the statistical ceiling: no model trained purely on likelihood can beat this bound.

\subsection{Reversal symmetry of entropy rate}

Let $\{X_t^R\}$ denote the time-reversed process, with
\[
P^R(x_1^n) = P(x_n,\ldots,x_1).
\]
Because Shannon entropy is invariant under symbol reordering inside the probability distribution, one obtains
\begin{equation}
h(X) \;=\; h(X^R).
\end{equation}
In other words, the entropy rate --- and therefore the limiting perplexity --- is the same for forward and reversed processes.  
This supports the intuition that autoregressive training alone may not distinguish forward from backward text.

\subsection{Time-reversal divergence: measuring asymmetry}

Despite identical entropy rates, natural language is not statistically reversible.  
A rigorous measure of this asymmetry is the \emph{time-reversal divergence} (sometimes called entropy production rate):
\begin{equation}
\mathcal{A} \;=\; \lim_{n\to\infty}\frac{1}{n}\,
D_{\mathrm{KL}}\!\big(P(X_{1:n}) \,\|\, P^R(X_{1:n})\big),
\end{equation}
where $D_{\mathrm{KL}}$ is the Kullback–Leibler divergence.  

\begin{remark}
If $\mathcal{A}=0$, the process is statistically indistinguishable from its reversal (full symmetry).  
If $\mathcal{A}>0$, there is a detectable arrow of time in the data distribution.  
For natural languages, prior linguistic evidence suggests that $\mathcal{A}$ is nonzero: e.g., phonotactic rules, syntactic dependencies, and discourse structures are direction-sensitive.
\end{remark}

\subsection{Implication for Language Modeling}

Equations above suggest:
\begin{itemize}
    \item Entropy rate $h$ and perplexity are \emph{reversal-invariant}.  
    \item But the KL-based asymmetry $\mathcal{A}$ is strictly positive for human language.
\end{itemize}
Thus we argue that the autoregressive NLL objective optimizes toward a quantity that is effectively \emph{blind to direction} ($h$), while overlooking the genuine asymmetry present in data ($\mathcal{A}$).  
This observation aligns with causal modeling perspectives in language~\citep{peters2015causal} and motivates explicit consideration of directional structure in pretraining objectives.
\section{Baking the Arrow of Time into the Objective}
\label{sec:arrow-regularizers}

\subsection{Motivation}

Section~\ref{sec:info-theory} showed that the autoregressive NLL objective
optimizes toward the entropy rate h, a quantity that is
invariant under sequence reversal.
This means that, at pretraining time, the model has no incentive to prefer one
temporal direction over the other.
If language generation is an inherently irreversible process—where information
flows from past to future—then the learning signal itself is misaligned
with the phenomenon it models.

\subsection{Implication}

The formal symmetry established in Section~\ref{sec:formal-reversal}
and the entropy analysis in Section~\ref{sec:info-theory}
together imply that autoregressive pretraining is \emph{direction-blind}.
Causal masks at inference create the appearance of forward reasoning,
but they merely enforce a decoding convention, not a learned arrow of time.
In effect, the model learns to compress correlations in both temporal
directions equally, while human language and reasoning depend on
direction-dependent constraints—agreement, discourse flow, cause and effect.

This observation reframes much of contemporary scaling:
larger models improve performance not because they recover asymmetry,
but because they memorize more of the bidirectional statistics
that proxy for it.
The success of chain-of-thought prompting, fine-tuning,
and reinforcement alignment can be viewed as successive attempts
to reintroduce directionality \emph{after} pretraining—
to supply, post hoc, the very asymmetry that the likelihood objective erased.

\subsection{Conclusion}

The argument is not that autoregressive modeling fails,
but that it succeeds within a symmetric world:
its objective treats forward and backward sequences as equally valid encodings
of the same distribution.
If human language is generated by irreversible cognitive and physical processes,
then this symmetry represents a fundamental abstraction gap—
a boundary between statistical prediction and genuine reasoning.
\section{Toward Asymmetry-Aware Understanding}
\label{sec:conclusion}

The results in Sections~\ref{sec:formal-reversal}–\ref{sec:info-theory} establish that the standard autoregressive objective is formally reversal-invariant: it minimizes an entropy-based loss that is indifferent to sequence direction.
This symmetry explains why models trained on reversed corpora can achieve indistinguishable perplexity, and why direction-sensitive reasoning must be imposed through data biases, architecture conventions, or post-hoc alignment rather than learned from the pretraining signal itself.

Rather than proposing architectural remedies, we interpret this invariance as a boundary condition of next-token prediction.
Autoregressive modeling optimizes a reversible statistic of text, while human language generation and reasoning are irreversible physical and cognitive processes.
The mismatch does not invalidate current models—it defines the limits of what purely statistical objectives can capture.
Recognizing this constraint reframes the path forward: building systems that encode the arrow of reasoning may require objectives grounded not just in correlation, but in causality and temporal asymmetry.

\section{Conclusion}

We have presented a formal treatment of \emph{reversal invariance} in autoregressive language modeling, showing that the next-token objective is invariant to string reversal up to vocabulary permutation and positional reindexing.
From an information-theoretic perspective, this symmetry explains why entropy rates and perplexity floors are identical for forward and reversed text, even though natural language exhibits nonzero time-reversal divergence.

Our contribution is deliberately theoretical. 
Rather than proposing architectural fixes, we sought to formalize the symmetry itself and clarify its implications for reasoning and directionality in large language models.
By making this invariance explicit, we highlight a conceptual boundary of likelihood-based pretraining: it optimizes reversible statistics of text, while language and reasoning in the real world are inherently irreversible processes.

\section{Limitations}

This work is a \textbf{theoretical position paper}. 
We have not conducted empirical experiments, and our claims about reversal invariance are presented as formal analysis rather than as performance predictions.
Our illustrative examples (such as the Q$\!\to\!$U spelling rule) capture localized asymmetry but do not represent the full complexity of natural language structure.
The practical implications of reversal invariance—its role in reasoning failures or causal understanding—remain to be tested empirically.
We emphasize that the goal of this work is to lay a conceptual foundation, inviting follow-up studies that examine directionality in pretraining and inference more directly.

\section{Ethics Statement}

This work is purely theoretical and does not involve human subjects or sensitive data.
Its goal is to clarify the formal limits of current reasoning claims in language models and to motivate empirical inquiry into their directional and causal properties.
If future research explores asymmetry-aware training, such methods should be assessed for their interpretability, transparency, and social impact before deployment.

\section*{Acknowledgements}

The author thanks colleagues and mentors for insightful discussion and feedback.
Any errors or oversights remain the sole responsibility of the author.

\bibliographystyle{plain}
\bibliography{references}
\end{document}